\definecolor{PaleBlue}{rgb}{0.87,0.92,1}
\definecolor{PaleGreen}{rgb}{0.87, 1.0, 0.92}
\newtheorem{proposition}{Proposition}[section]
\title{\LARGE \bf
LGMCTS: Language-Guided Monte-Carlo Tree Search \\ for Executable Semantic Object Rearrangement 
}
\author{Haonan Chang, Kai Gao, Kowndinya Boyalakuntla, Alex Lee, Baichuan Huang,  Harish Udhaya Kumar, \\ Jingjin Yu, Abdeslam Boularias
\thanks{The authors are with the Department of Computer Science,
        Rutgers University, 08854 New Brunswick, USA. This work is supported by NSF awards 1846043 and 2132972.}}
\begin{document}

\maketitle
\thispagestyle{empty}
\pagestyle{empty}

\begin{abstract}



We present LGMCTS, a framework that uniquely combines language guidance with geometrically informed sampling distributions to effectively rearrange objects according to geometric patterns dictated by natural language descriptions. LGMCTS uses Monte Carlo Tree Search (MCTS) to create feasible action plans that ensure executable semantic object rearrangement. 
We present a comprehensive comparison with leading approaches that use language to generate goal rearrangements independently of actionable planning, including Structformer, StructDiffusion, and Code as policies.
We also present a new benchmark, the Executable Language Guided Rearrangement (ELGR) Bench, containing tasks involving intricate geometry. With the ELGR bench, we show 
limitations of task and motion planning (TAMP) solutions that are purely based on Large Language Models (LLM) such as Code as Policies and Progprompt on such tasks. Our findings advocate for using LLMs to generate intermediary representations rather than direct action planning in geometrically complex rearrangement scenarios, aligning with perspectives from recent literature.
Our code and supplementary materials are accessible at \href{https://lgmcts.github.io/}{\textcolor{blue}{https://lgmcts.github.io/}}.


\end{abstract}

\section{Introduction}
Everyday tasks, such as ``Set up the kitchen", involve organizing objects based on verbal instructions, a process that is intuitive for humans but that presents a significant challenge for robots. The semantic rearrangement problem seeks to empower robots with the ability to reorganize a scene according to linguistic descriptions. This challenge necessitates that robots comprehend the task through natural language, and address the corresponding Task And Motion Planning (TAMP) problem effectively.

Traditionally, solving this problem requires formalizing semantic rearrangement into a symbolic representation, clearly defining the goal configuration or constraints, and using formal planners such as STRIPS~\cite{strips} and PDDL~\cite{pddl2}, or search-based planners like MCTS~\cite{mcts} to devise a feasible plan. Although effective, this approach demands expert-level knowledge to abstract a problem into a formal representation, limiting accessibility for average users.

To overcome this challenge, numerous recent studies have sought to tackle the problem directly from linguistic inputs and RGB-D observations~\cite{clip_port,struct_diffusion,liu2022structformer}. One approach uses multi-modality transformers to establish a correlation between verbal descriptions and object positions using data generated from the simulation. Following work such as StructDiffusion~\cite{struct_diffusion} further improved this method by using a diffusion model to build the multi-modality solution. However, a common drawback of these methods is that they rely on an offline training stage, which makes them applicable only to trained object categories and spatial patterns.

\begin{figure}
    \centering
    \includegraphics[width=\linewidth]{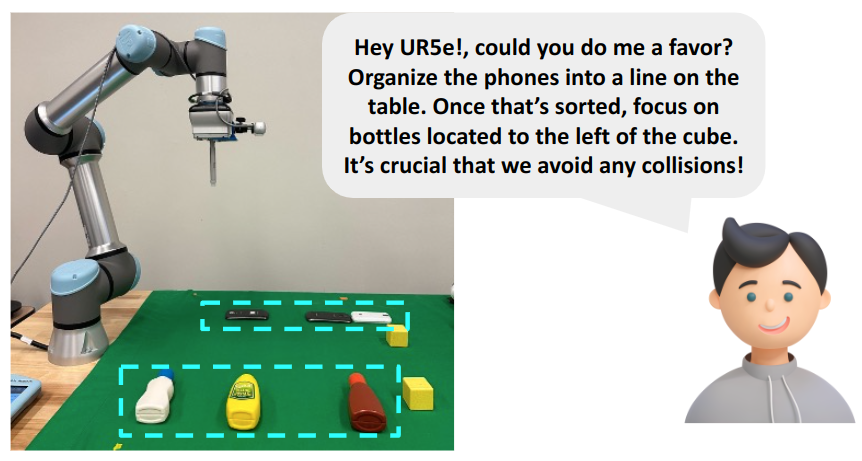}
    \caption{\small{Robotic Setup: a UR5e robot equipped with a RealSense D455 camera. The task is to re-arrange the objects, which are unknown to the robot, according to a natural language instruction.}}
    \label{fig:robot}
\end{figure}

With the advent of Large Language Models (LLMs), models such as GPT~\cite{brown2020language} and Llama~\cite{llama2} have demonstrated impressive potential in understanding complex scenarios and exhibiting zero-shot planning capabilities. This has led researchers to explore the utilization of LLMs in solving language-based TAMP problems~\cite{huang2022language,ahn2022can,code-as-policies}.  However, despite specific considerations for the feasibility of plans proposed by LLMs, it has been reported that these plans significantly lag in executability and completeness when compared to those crafted by a properly implemented traditional solver designed for the task~\cite{valmeekam2022large}. This observation has naturally led researchers to seek methods that merge the user-friendliness of LLMs with the robustness of traditional TAMP algorithms such as PDDL, STRIPS, or MCTS. LLM-GROP~\cite{ding2023task} follows this approach in rearrangement, employing LLMs to parse user tasks from language into pairwise spatial relationship specifications and then calling a sampling-based task and motion planner~\cite{zhang2022visually} to generate the plan. A limitation of LLM-GROP is that it can only handle pair-wise relationships, and thus cannot perform complex rearrangement tasks.
AutoTAMP~\cite{chen2023autotamp} uses LLMs to translate natural language into formal representations and then invokes a planner to tackle the problem. AutoTAMP can solve a wide range of TAMP tasks, but it does not apply to general semantic rearrangement where the action space is not discrete and potentially large.


We present Language-Guided Monte-Carlo Tree Search (LGMCTS), a new technique for executable semantic object rearrangement. Like its predecessors, AutoTAMP and LLM-GROP, LGMCTS leverages LLMs for generating intermediate representations and employs a planner for formulating feasible plans. A key novelty of LGMCTS is the integration of parametric geometric priors for spatial relationship representations. LGMCTS facilitates more nuanced handling of complex geometric relationships among multiple objects, addressing scenarios that require organization beyond simple pairwise interactions such as configurations in lines or rectangles. Additionally, LGMCTS takes a holistic approach by simultaneously considering task planning (goal specification) and motion planning (execution order and intermediate steps). During planning, an obstacle relocation strategy is used to handle obstacles that may block the execution.
This coordination ensures that plans are not only semantically coherent but also practically executable, offering a balanced consideration of goal achievement and operational efficiency.

To assess the efficacy of LGMCTS, we introduce the Executable Language-Guided Rearrangement (ELGR) benchmark, featuring over 1,600 varied language queries and robot execution checks. Our evaluations indicate that LGMCTS performs effectively on the ELGR benchmark, especially in comparison with Code as policies and Progprompt in terms of feasibility and semantic consistency of the generated goals. LGMCTS also outperforms Structformer and StructDiffusion in goal generation  on the Structformer dataset.

\section{RELATED WORKS}

\subsection{Learning-based Semantic Rearrangement}
The semantic rearrangement problem consists of devising a rearrangement plan that is both semantically congruent with a given language description and physically feasible~\cite{tellex2011understanding,howard2014natural,chang2023contextawareentitygroundingopenvocabulary}. In recent years, this has gained increased traction, particularly as a pivotal application in language-driven robotics. CLIPort~\cite{clip_port} took the initial step in this direction by merging CLIP features with a Transporter network. Yet, its design is limited to basic pick-and-place tasks. 
Structformer~\cite{liu2022structformer} advanced the field using a transformer model, by simulating rearrangements with hand-crafted rules and connecting language tokens to object poses. Leveraging Structformer's dataset, StructDiffusion~\cite{struct_diffusion} introduced a pose diffusion model to predict poses from language.  Nonetheless, a common shortcoming amongst all these methodologies is their limitation to a single structure or pattern (e.g. circle, line) that they have been trained on, making composite patterns (e.g. rectangle + tower) a persistent challenge. Moreover, the rearrangement goals generated by these methods can be inexecutable. 

\subsection{LLM-driven Task And Motion Planning}
Recent advancements in LLMs~\cite{brown2020language, llama2} have showcased impressive performance across a broad spectrum of tasks. There has been a growing interest in using LLMs for TAMP~\cite{huang2023inner, ahn2022can, code-as-policies, singh2023progprompt, song2023llm, driess2023palm, lin2023text2motion, wu2023tidybot, huang2022language, ding2023task, huang2023voxposer, wu2023integrating, silver2022pddl, liu2023llm+, xie2023translating, rana2023sayplan, guan2024leveraging, llm_mcts, birr2024autogpt+}, owing to their few-shot and zero-shot reasoning ability \cite{brown2020language, COT, wei2022emergent}. Grounding language into a sequence of plannable tasks/actions without retraining LLMs was initially explored in~\cite{huang2022language}. 
Following this, SayCan~\cite{ahn2022can} was proposed to facilitate the conversion of LLM-generated plans into robot-executable steps, though it struggled with addressing task execution failures. 
Inner Monologue~\cite{huang2023inner} improved upon SayCan by incorporating real-time feedback to adjust plan post-execution, yet Inner Monologue is prone to generating suboptimal and infeasible plans. 
Instead of using LLMs to plan with predefined skills, other approaches such as Code as policies~\cite{code-as-policies} and Progprompt~\cite{singh2023progprompt} leveraged LLMs for policy code generation, showcasing their potential in behavioral common sense and sequential policy logic. However, they do not show promising results on complex object rearrangement tasks requiring more nuanced spatial context. This is due to the limitations of LLMs' planning ability over long horizon tasks~\cite{valmeekam2022large, liu2023llm+}. 

Owing to the drawbacks of the aforementioned works, to offer a more reliable and interpretable planning process, recent works~\cite{liu2023llm+, valmeekam2022large, chen2023autotamp, xie2023translating, guan2024leveraging} emphasize translating natural language commands into intermediate representations that are interpretable by traditional TAMP algorithms. 
Text2Motion~\cite{lin2023text2motion} uses LLMs to greedily plan a skill sequence combined with a geometric feasibility planner to ensure that the geometric dependencies are addressed. However, Text2Motion's hybrid LLM planner is less efficient in large spaces than planners such as MCTS~\cite{mcts}. 
LLM-GROP~\cite{ding2023task} translates language instructions to symbolic spatial relationships with LLMs and employs a task and motion planner named GROP~\cite{zhang2022visually} to perform the rearrangement. Although GROP is optimized for efficiency and feasibility, LLM-GROP is limited by its focus on simple object rearrangements due to its treatment of multi-object semantic relationships as pairwise reasoning. AutoTAMP~\cite{chen2023autotamp} employs LLMs for generating and validating STL representation from natural language and utilizes a formal STL planner~\cite{sun2022multi} for generating optimal trajectories. Although effective across a spectrum of tasks, its applicability is limited in non-discrete action spaces, as is the case in semantic rearrangement tasks.

To address these limitations, we introduce LGMCTS, a new approach that incorporates parametric geometric priors and that is guided by MCTS's efficient exploration to provide a feasible TAMP solution for complex rearrangement tasks.

\section{Preliminaries}

\subsection{Problem Formulation}
\label{sec:problem}
Semantic rearrangement is the task of rearranging a scene according to a series of natural language descriptions. One key insight here is that a goal described by language is usually a distribution rather than a single position. For example, ``Put the mug at the right side of bowl" refers to a uniform distribution among a region that is right to the bowl. If we know the position distribution for each object, we just need to sequentially sample the poses for each object. The semantic rearrangement problem is then converted to a sequential sampling problem.
\color{blue}


\color{black}

With this insight, we define the task of semantic rearrangement as follows.
The robot is given as input a scene with objects from a set 
$O_S = \{o_1, o_2,\ldots, o_N\}$
and a command \(L\), where \(L\) is a pure natural language command that implies a desired distribution list 
$\mathcal{F}=\{f_i: p(o_i) \sim f_{i} | o_i \in O_R\}$, where $p(o_i)$ refers to the position of object $o_i$.
Here, \(O_R \subseteq O_S\) denotes the objects requiring an action based on $L$, and \(f_i\) indicates the desired pose distribution for each object. The objective is to identify an optimal action sequence, $A = (a_t)_{t=1}^{H}$, where each action $a_t$ corresponds to moving an object $o_i$ to a sampled position $p(o_i)$,
with the objective to achieve a goal arrangement aligning $L$, i.e., $\prod_{o_i \in O_R} f_{i}(p_i) > 0$ and minimizing the number of action steps \(H\). Noticeably, \(A\) includes not only movements of objects \(o \in O_R\), but also those of distracting objects, denoted as \(O_D\), with \(O_D \subseteq O_S\).

\subsection{Monte Carlo Tree Search (MCTS)} \label{sec:mcts_frame}
We provide here a brief reminder of the MCTS~\cite{mcts} technique. A typical MCTS algorithm iteratively builds a search tree by performing the following four operations.
\begin{enumerate}[leftmargin=5mm]
    \item {\bf Selection.} On a fully expanded node (all the children nodes have been visited), MCTS selects to explore the branch  with the highest Upper Confidence Bound (UCB), 
    \begin{equation}
        \arg\max_a \left ( \dfrac{w(f(s,a))}{n(f(s,a))}+C\sqrt{\dfrac{\log(n(s))}{n(f(s,a))}} \right),
    \end{equation}
    where $f(s,a)$ is the child node of state $s$ after action $a$, $w(\cdot)$ and $n(\cdot)$ are respectively cumulative rewards and the number of visits to a state.
    \item {\bf Expansion.} On a node that is not fully expanded, MCTS selects an action that has not been attempted yet.
    \item {\bf Simulation.} Given a node and a selected action, MCTS simulates a sequence of actions and receives a reward.
    \item {\bf Back-Propagation.} MCTS passes the terminal reward to ancestor nodes to update their cumulative expected rewards, which indicate the quality of the branch.
\end{enumerate}
At each iteration, MCTS starts from the root node. When all the child nodes of the current node are visited, MCTS selects a child node with the UCB formula.
When some child nodes of the current node are unvisited, MCTS expands by randomly selecting a new action and performing a simulation to reach a new child node.
The new node returns a reward, which is back-propagated to all the ancestor nodes.

\section{Method}
\label{sec:method}
In a nutshell, LGMCTS starts by calling an LLM to parse the language description $L$ to a list of spatial distributions $\mathcal{F}=\{f_i: p(o_i) \sim f_{i} | o_i \in O_R\}$. Then it uses an MCTS-based procedure to find a physically feasible action sequence $A$ to rearrange the scene according to distributions $\mathcal{F}$.

\subsection{Language Parsing \& Object Selection} \label{sec:parse}
During the language parsing stage, we parse $L$ into $\mathcal{F}=\{f_i: p(o_i) \sim f_{i} | o_i \in O_R\}$. Similar to previous methods~\cite{code-as-policies}, we conduct an automated prompt engineering to guide the LLM to perform the parsing. Fig.~\ref{fig:llm_parsing} showcases how prompt engineering is implemented. Essentially, the language model translates user requirements into structured goal configurations and constraints that guide task execution.

\begin{figure}[h]
    \centering
    \includegraphics[width=0.80\linewidth]{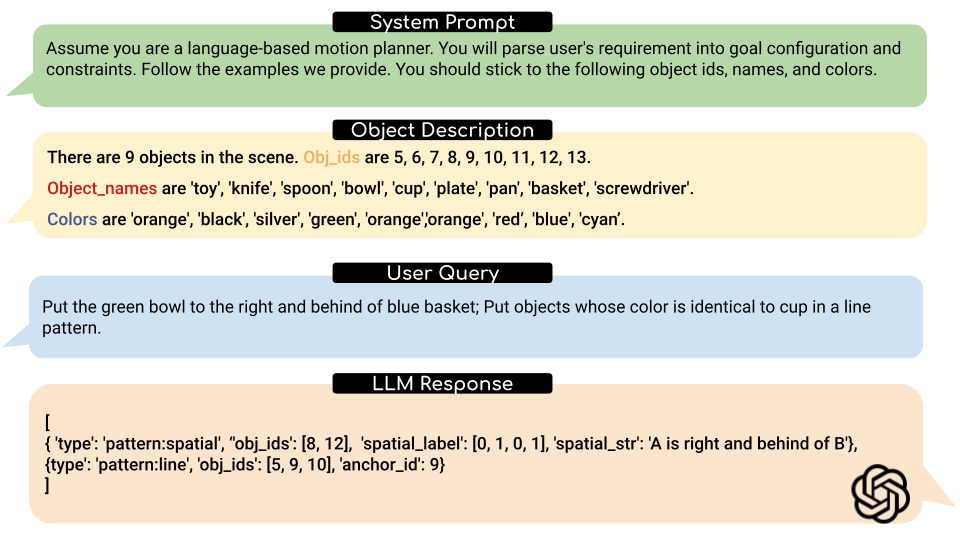}
    \caption{\small{An example of language parsing. We are using GPT-4~\cite{brown2020language} in this work.}}
    \label{fig:llm_parsing}
\end{figure}

Consider the example depicted in Fig.~\ref{fig:llm_parsing}. First comes the {\bf system prompt} providing guidelines for the LLM to follow when interpreting user queries. Then, we need to provide an {\bf object description} such as semantic labels, colors, and IDs for the objects in the scene. In practice, we use the Recognize Anything Model (RAM)~\cite{huang2023tag2text, zhang2023recognize} for producing the semantic labels, and a color detector to determine the colors of the objects in the scene. A unique ID is assigned to each object (using for the whole planning). Finally, we provide a {\bf user query} describing the rearrangement goal. A structured answer is returned from the LLM.
LLM's answers suggest how many patterns there are and which objects should be selected to form the pattern.

\subsection{Parametric Geometric Prior} 
\label{sec:prior}
\begin{figure}
    \centering
    \includegraphics[width=0.85\linewidth]{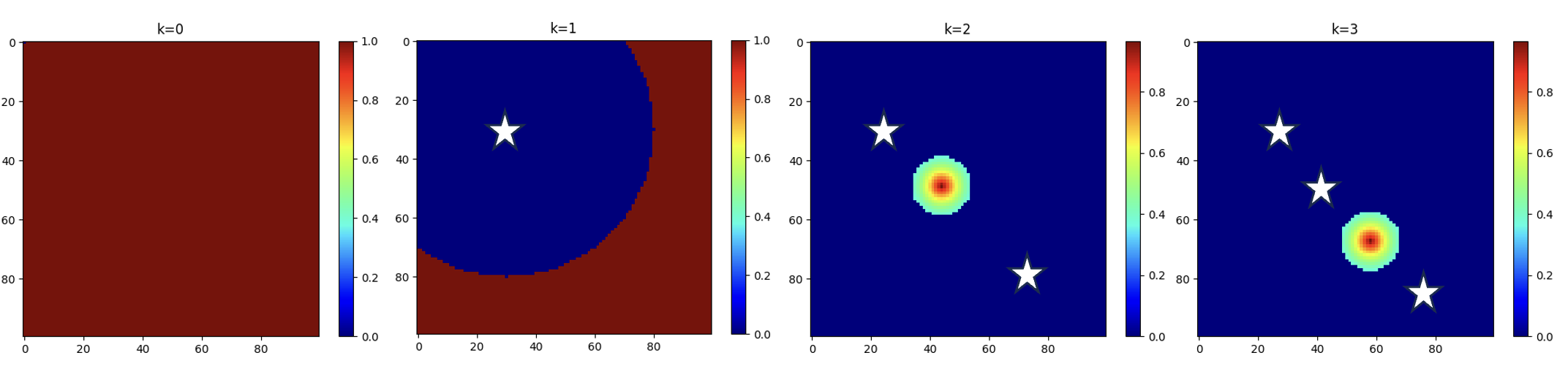}
    \caption{\small{Visualization of $(x,y)$ prior for `line' pattern. From left to right: $K=0$, $K=1$, $K=2$, $K=3$, where $K=|O_{R}^{sampled}|$, the number of sampled object poses. White star marks are sampled poses. When $K=0$, the pose can be sampled anywhere. When $K=1$, it needed to sampled outside a circle region. After that, all poses will be sampled along the line defined by the first two poses. }}
    \label{fig:prior}
    \vspace{-5mm}
\end{figure}

As mentioned in Section~\ref{sec:problem}, if we know the goal position distribution for each object, the semantic rearrangement problem can be converted to a sequential sampling problem. There are multiple details we need to pay attention to in this process. (1) The position distribution is not fixed but varies with the progress of sampling. For example, if we say objects A, B, and C need to be put into a line, and we sample in the order of A, B, and C, then the distributions of A and B are actually unbounded, and C must be placed on the line defined by the positions of A and B. (2) The position distribution should be collision-free. (3) One can build a database for many different spatial distributions, but we need to have a flexible mechanism to associate LLM's inferred distribution types with items in the database.

We show in the following how to compute the pose distribution function \( f_{i}(p_i|P_S, L) \) for each object  \( o_i \) in the set \( O_R \) during the sequential pose sampling process. Here, \( p_i \) denotes the pose \((x, y, \theta)\) of an object \( o_i \), \( P_S \) represents the list of current poses for all objects in the scene, and \( L \) is the natural language command. 

We compute \( f_{i}(p_i|P_S, L) \) as an element-wise product of two components, a pattern prior function \( f_{\text{prior}}(p_i|P_{R}, L) \) and a boolean function \( f_{\text{free}}(p_i|P_S) \) of the workspace,
\begin{equation}
    f_{i}(p_i|P_S, L) = f_{\text{prior}}(p_i|P_{R}, L) \times f_{\text{free}}(p_i| P_S)
\end{equation}
In this equation, \( P_{R} \) \color{black} is the list of current poses of all objects in  \( O_{R} \)\color{black}. 
\color{black}
We note that \( P_{R} \) is a subset to \( P_{S} \).
\color{black}
The function \( f_{\text{free}}(p_i|P_S) \) is set to $1$ \color{black} if $p_i$ is collision-free from the remaining poses in $P_S$, and to $0$ otherwise. \color{black} $f_{\text{free}}$ is determined by running a 2D collision simulation using point cloud observations for each object \( o_i \) \(\in\) \( O_S \).

Our primary focus is to determine \( f_{\text{prior}}(p_i|P_{R}, L) \). To this end, we employ an approach akin to the one described in \cite{ahn2022can}. We maintain a database comprising a collection of predefined prior functions. Each of these functions is linked with one or more Sentence-BERT embeddings, acting as keys. The function corresponding to the best matching key is selected, as follows,
\begin{equation*}
     f_{\text{prior}}(p_i|P_{R}, L) = f_{\text{prior}}^K(p_i|P_{R}) \textrm{ with } K = \underset{K\in \text{database}}{\text{argmax}}(\Theta_k \cdot \Theta({L}))
\end{equation*}
Here, \( \Theta_k \) is the \( K^{th} \) key in the database, and \( \Theta({L}) \) is the Sentence-BERT embedding generated from the language instruction \( L \).
In summary, the most suitable prior function from the database is selected based \color{black} on \( \Theta({L}) \) \color{black}.

We now delve into the definition of \( f_{\text{prior}}^K(p_i|P_{R}) \) in the context of our work. We use a unique model for representing various distributions by employing parametric curves. A parametric curve can be expressed as \( (x, y) = \gamma(t, \kappa) \), where \( t \) ranges from 0 to 1 and \( \kappa \) is a set of curve-defining parameters. In our work, \( \kappa \) is modeled as a function of two 2D positions, denoted as \( \kappa(p_0, p_1) \). Therefore, for each pattern, we define two functions: \( \gamma \) and \( \kappa \).

Given that pattern prior $f_{prior}$ is used inside a sequential sampling process (check Section~\ref{sec:mcts} for more details), the prior distribution needs to be iteratively updated to capture the history of sampling. Consequently, we further categorize \( O_{R} \) 
based on whether the objects have been sampled in the current branch of the MCTS-Planner. Subsets \( O_{R}^{sampled} \) and \( O_{R}^{unsampled} \) denote the sampled and non-sampled objects, respectively. Thus, $O_{R} = O_{R}^{sampled} \cup O_{R}^{unsampled}$.


The probability  \color{black} $f_{\text{prior}}^K(p_i|P_{R}))$ \color{black} of sampling a pose $p_i = (x_i, y_i, \theta_i)$ for next object \color{black} $o_i\in O_{R}^{unsampled}$ \color{black} is given as follows.
\begin{itemize}
    \item If \color{black} $|O_{R}^{sampled}|=0$ \color{black} then \( (x_i, y_i, \theta_i) \sim U \), suggesting that the first object can be placed arbitrarily.
    \item If $|O_{R}^{sampled}|=1$ then \( \sqrt{(x_i - x_0)^2 + (y_i - y_0)^2} \leq \delta \) and \( (x_i, y_i, \theta_i) \sim U \), imposing that the second object must be sampled uniformly at a position that is distanced from the first by at most \( \delta \).
    \item If $|O_{R}^{sampled}|>1$ then \( (x_i, y_i) = \gamma\big(\frac{|O_{R}^{sampled}|}{|O_{R}|}, \kappa(p_0, p_1)\big) + \epsilon \) where \( \epsilon \sim G(0, \sigma) \), here $G$ represents a Gaussian distribution \color{black} with mean zero \color{black} and variance $\sigma$, \color{black} and \color{black}  $\theta = atan2(1, \gamma'(\frac{K}{N}))$. \color{black} \color{black}$\theta$  represented the rotation angle of the object.
\end{itemize}

Our parametric geometrical representation enables us to model any geometric shapes that can be written into the format of a parametric curve. In our current implementation, we defined shapes such as ``line," ``circle," ``rectangle," ``tower," ``spatial:left," ``spatial:right" and so on. Due to space constraints, we refrain from elaborating on the definitions of $\gamma$ and $\kappa$ for all these predefined patterns.  Fig.~\ref{fig:prior} illustrates an example of a parametric geometric prior. Noticeably, we divide patterns into ``ordered" and ``unordered" based on whether the pattern requires an execution sequence.






Note that in our work, language instruction $L$ is typically composed of multiple instructions that deal with different subsets of objects. Specifically, $L$ can be interpreted as a list \color{black} $\{L_i\}$ of sub-instructions $L_i$ \color{black}. For example, $L$ can be a composite instruction: $L = \{L_1, L_2\}$, where $L_1$ refers to placing objects $A$, $B$, and $C$ in a line, and $L_2$ refers to placing object $A$ on the left of $B$. Each sub-instruction $L_i\in L$ is associated with a subset of objects \color{black} $O_{R_i} \subseteq O_R$\color{black}. In our example, $O_{R_1} = \{A,B,C\}$ and $O_{R_2} = \{A,B\}$. In the sequential sampling process described before, we presented the case of a single language instruction for simplicity, but the same process is used for sampling poses given by a complex instruction.

\subsection{Monte-Carlo Tree Search (MCTS) for TAMP}  
\begin{figure}[h]
    \centering
    \includegraphics[width=1.0\linewidth]{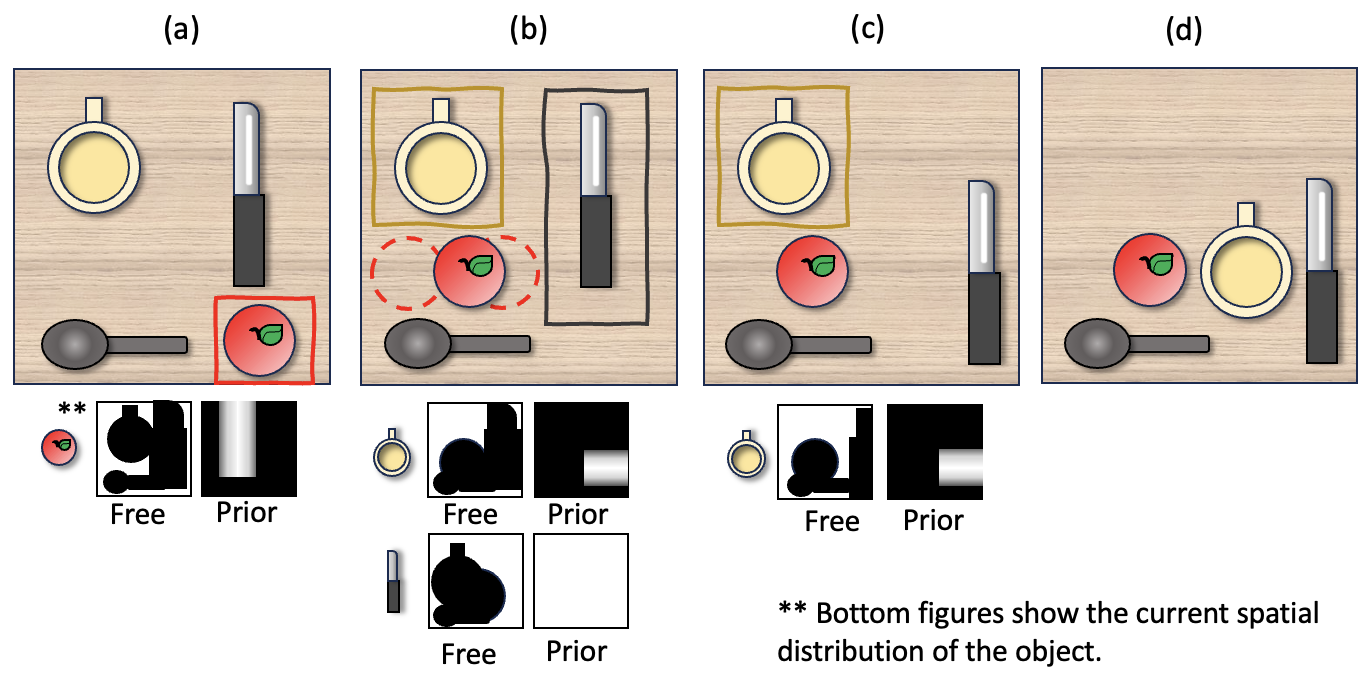}
    \caption{\small{
A minimal example illustrates our MCTS-Planner's aim to arrange a table. The language description provided is: ``Can you please put the apple behind the spoon? And I also want the cup at the right of the apple." The top row displays the current scene arrangement, while the bottom row shows the $f_{prior}$ and $f_{free}$ for the object being manipulated. $f=f_{prior} \times f_{free}$. In spatial distribution figures, black represents probability 0, and white probability 1.
}}
    \label{fig:mcts_exmp}
\end{figure}


\label{sec:mcts}
As previously mentioned, our rearrangement problem can be formulated as a sequential task. In each step, we sample a pose $p_i$ for each object \color{black} $o_i \in O_R$ \color{black} according to a pose distribution $f_{i}$ selected by the LLM and computed as described in Section~\ref{sec:prior}. Once we complete all samplings, all objects will have been placed in their desired locations. However, this task cannot be executed by the robot in a naive sequential order, as the rearrangements made in previous steps may obstruct subsequent sampling. Therefore, we propose a task and motion planner based on the MCTS algorithm to simultaneously arrange the task and address object relocation issues. The objective of our MCTS-Planner is to fulfill all object position requirements defined by $\mathcal{F}$, the spatial distribution list. In the MCTS-Planner, we maintain a tree where each node $s$ in the tree comprises the current objects' poses, $\{p_1, p_2, ..., p_N\}$, and the remaining object spatial requirements $\mathcal{F}_r$, where $\mathcal{F}_r \subseteq \mathcal{F}$.

The MCTS-Planner operates through four phases: selection, expansion, simulation, and back-propagation, as described in Section~\ref{sec:mcts_frame}, adhering to the methodology from \cite{labbe2020monte} except for the simulation stage.
The reward for transitioning to a new state $s$ is quantified by the reduction in the number of spatial requirements, that is, $|\mathcal{F}|-|\mathcal{F}_r|$. This reward structure mirrors the approach in \cite{labbe2020monte}, aiming to steer the search towards branches that efficiently move objects to their goal poses. The simulation phase is elaborated in Algorithm~\ref{alg:Simulation}. This phase begins with the MCTS state $s$ and an attempted action of placing an object $o_i$ in a pose $p_i\sim f_i$, where $f_i$ is the pose distribution explained in the previous section. If the targeted object $o_i$ is not reachable, e.g. there exist some obstacles above it, a reachable obstacle is randomly selected from those above it, and a collision-free pose within the workspace is sampled for relocation (Lines 2-5). If $o_i$ is reachable, an attempt is made to sample its pose with distribution $f_i$ (Line 7). \color{black} If the sampled pose is not collision-free, we will randomly choose an obstacle in the collision and relocate it to \color{black} a collision-free pose within the workspace (Lines 9-13).
Note that obstacle relocation may add previously placed objects back to the requirement list $\mathcal{F}_r$.
However, this obstacle relocation strategy increases the robustness of our rearrangement planner, especially when the environment is cluttered.
Although MCTS operates as an anytime search algorithm, our MCTS-Planner implementation returns the first solution it finds. Furthermore, we have proved that the MCTS-Planner is probabilistically complete within our specified framework in Proposition~\ref{prof:mcts}.

Fig.~\ref{fig:mcts_exmp} presents an illustrative example of the MCTS-Planner at work. Owing to space constraints, we will only explore three simulation steps along the branch that yield a feasible solution. In this scenario, the user requests, ``Can you please put the apple behind the spoon? And I also want the cup to be at the right of the apple." In response, the LLM generates the spatial distribution list $\mathcal{F}=\{f_1, f_2\}$, where $f_1$ is for positioning the apple behind the spoon, and $f_2$ is for placing the cup to the right of the apple. Fig.~\ref{fig:mcts_exmp}(a) illustrates the initial arrangement of objects. Given the dependency of $f_2$ on the apple's position, $K$ actions will be sampled for $f_1$, but none for $f_2$ in this initial setup. Fig.~\ref{fig:mcts_exmp}(b) depicts the outcome of an action of sampling a pose from $f_1$, where the apple is relocated according to $f_1$. The dashed-line circles represent the other $K-1$ actions originating from the root node. After sampling $f_1$, we are left with $\mathcal{F}_r=\{f_2\}$ as shown in Fig.~\ref{fig:mcts_exmp}(b), and an attempt is made to position the cup to the right of the apple. However, the goal position is in collision with the knife, so we need to relocate the knife. Fig.~\ref{fig:mcts_exmp}(c) demonstrates the knife's relocation, maintaining $\mathcal{F}_r=\{f_2\}$. 
Ultimately, Fig.~\ref{fig:mcts_exmp}(d) showcases the final planning result.


\begin{algorithm}
\begin{small}
    \SetKwInOut{Input}{Input}
    \SetKwInOut{Output}{Output}
    \SetKwComment{Comment}{\% }{}
    \caption{Simulation}
		\label{alg:Simulation}
    \SetAlgoLined
		\vspace{0.5mm}
    \Input{$s$: an MCTS state,\\ 
    $f_i$: a pose distribution {\footnotesize (place $o_i$ in a pose $p_i\sim f_i$)}.
    }
    \Output{$(o,p)$: a rearrangement action {\footnotesize (place $o$ in pose $p$)}.}
\vspace{0.5mm}
\If{$o_i$ is not reachable}
{
Select a reachable object $o$ that is blocking object $o_i$;\\
$p\gets$uniformSampling($o,s$);\\
\lIf{$p$ exists}{\Return ($o$,$p$)}
\lElse{\Return \textbf{failure}}
}
\Else{
$p\gets$sampling($o_i, s, f_i$);\\
\lIf{\text{collisionFree$(o_i,p,s)$}}{\Return ($o_i$,$p$)}
\Else{
$o \gets$ Randomly choose an obstacle in collision;\\
$p\gets$\text{uniformSampling$(o,s)$};\\
\lIf{$p$ exists}{\Return ($o$,$p$)}
\lElse{\Return \textbf{failure}}
}
}

\end{small}
\end{algorithm}

\begin{proposition}\label{prof:mcts}
MCTS-Planner is probabilistic complete.
\end{proposition}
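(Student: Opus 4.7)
The plan is to adopt the standard strategy used for sampling-based planners: exhibit, for any feasible instance, a reference action sequence of finite length $H^*$ and show that MCTS-Planner reproduces \emph{some} feasible sequence with positive probability per iteration. Because the UCB rule guarantees every expanded child is revisited infinitely often, it suffices to establish that a single run of the selection--expansion--simulation pipeline reaches a goal-satisfying leaf with probability bounded below by a constant $p^\star>0$; repeating over $n$ iterations then drives the success probability to at least $1-(1-p^\star)^n$, which tends to $1$. The first step is therefore to reduce probabilistic completeness to a per-iteration lower bound on the probability of producing a feasible rollout.

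The core technical step is to show that each action produced by Algorithm~\ref{alg:Simulation} matches the corresponding reference action with positive probability, uniformly across the branch. For the direct sampling branch, $f_i = f_{\text{prior}}^K \cdot f_{\text{free}}$ has, by the construction in Section~\ref{sec:prior}, either a uniform component on the workspace (when $|O_R^{sampled}|\le 1$) or a Gaussian component with strictly positive variance $\sigma$ around a parametric-curve target (when $|O_R^{sampled}|>1$). In either regime the density is strictly positive on every non-empty open subset of the free workspace, and feasibility of the reference solution guarantees a non-empty open ball of admissible poses around each reference target. The uniform relocation branch (Lines~3 and~11) inherits the same property trivially. Taking a small open ball of radius $r>0$ around each reference pose and using that the workspace has finite Lebesgue measure yields per-step probabilities $\eta_t \ge \eta > 0$ bounded away from zero.

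The main obstacle is that obstacle relocation in Algorithm~\ref{alg:Simulation} may reinsert previously satisfied constraints into $\mathcal{F}_r$, so the effective rollout depth is not a priori bounded by $|\mathcal{F}|$. I would address this by fixing, at the outset, a reference execution of length $H^\star<\infty$ that includes all relocation moves needed in some feasible plan; its existence follows from the assumption that the instance admits a solution together with the fact that each singleton placement admits a collision-free pose in the workspace. The witness branch then has length $H^\star$, and each of its actions has matching probability at least $\eta$, giving $p^\star \ge \eta^{H^\star}>0$. Combining this per-iteration bound with the UCB exploration guarantee and the Borel--Cantelli argument sketched in the first paragraph yields probabilistic completeness. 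The delicate part of the argument is precisely the existence and finite length of this witness rollout under the relocation dynamics; the rest is a routine product-of-positives estimate.
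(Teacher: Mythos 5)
Your overall architecture---exhibit a finite witness sequence, lower-bound the probability that each of its actions is reproduced within a small neighborhood, and take a product over steps before letting the iteration count go to infinity---is exactly the paper's strategy. But the step you explicitly defer (``the delicate part \ldots\ is precisely the existence and finite length of this witness rollout under the relocation dynamics'') is the step the paper actually has to prove, and asserting that it ``follows from the assumption that the instance admits a solution'' does not close it. The issue is that MCTS-Planner's action space is restricted: Algorithm~\ref{alg:Simulation} only ever emits actions of two types---move an object $o_i\in O_R$ to a pose with $f_i(p)>0$, or relocate an object whose current pose blocks reachability or lies in a remaining goal region. An arbitrary feasible plan $A^*$ may contain moves of neither type (e.g., a preemptive shuffle of an object that is not yet blocking anything), and such moves can never be sampled by the planner. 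The paper closes this gap constructively: it builds $A^*_0$ from $A^*$ by delaying each non-conforming action until the state makes it conform to one of the two rules, and deleting it if a later action on the same object supersedes it first, then argues the resulting sequence still reaches the goal arrangement. Without some such construction, your $H^\star$-length witness is not known to lie in the planner's reachable tree at all, and the product bound $\eta^{H^\star}$ bounds the probability of an event that may be empty.

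A second, smaller gap: your ``small open ball of radius $r>0$ around each reference pose'' does not account for how deviations propagate. For patterned goals the distribution $f_i$ of a later object is parameterized by the \emph{actually sampled} poses of earlier objects (the curve parameters $\kappa(p_0,p_1)$), so placing early objects only approximately can shift or invalidate the admissible region for later reference poses, and an off-nominal placement can also create collisions the reference plan did not have. The paper handles this by choosing $r$ as the minimum clearance of the witness divided by $2|A^*_0|$, so that the accumulated perturbation over the whole sequence stays within every step's validity margin. You would need an analogous uniform shrinking of the per-step tolerance before the per-step probabilities $\eta_t\ge\eta>0$ can legitimately be multiplied. The remaining ingredients of your argument (positive density of the uniform and Gaussian components, the $1-(1-p^\star)^n\to 1$ limit) match the paper and are fine.
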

\begin{proof}
In the context of semantic rearrangement with distribution list $\mathcal{F}$, we consider a feasible sequence $A^*$ that moves objects to achieve a final state $A_f$, where $f_{i}(A_f[o_i]) > 0$ for each object $o_i$ in set $O_R$. We assert that as the number of iterations $K$ increases, the probability $p$ that MCTS finds a sequence meeting the goal approaches 1: $\lim_{K \to \infty} p=1$.

First, we prove that there is an action sequence $A^*_0$ whose actions can all be generated by MCTS-Planner.
Note that in MCTS-Planner, an action satisfies either of the two rules: \textbf{R1}: Move an object $o_i$ to goal: it requires $o_i\in O_R$ and the new pose $p$ satisfies goal pose requirements (i.e., $f_i(p)>0$); \textbf{R2}: Obstacle relocation: the old pose $p$ of the moved object makes other objects not reachable or it lies in others' goal regions, i.e., $\exists f\in\mathcal F_r, \ s.t. f(p)>0$ (Line 2, 10).
We construct $A^*_0$ by reordering and deleting actions in $A^*$ as follows: (1) If an action in $A^*$ satisfies \textbf{R1} or \textbf{R2}, we add the action to $A^*_0$. 
Otherwise, we delay the addition until it satisfies the rules; 
(2) If the action still cannot satisfy the rules before we examine the next action in $A^*$ for the same object, we delete the former action.
This process ensures $A^*_0$ comprises only MCTS-Planner viable actions, leading to the desired arrangement $A_f$.

Next, we prove that the probability for MCTS to find an action sequence in the ``neighborhood'' of $A^*_0$ approaches 1 as $K$ increases. 
For each intermediate state $s$ of $A^*_0$, denote $(A^*_0[s].o, A^*_0[s].p)$ the rearrangement action that $A^*_0$ chooses at $s$.
Let 
$r:=(\min_{1 \leq i\leq |A^*_0|} C_i)/2|A^*_0|$, where $C_i$ is the minimum distance making the placement pose of the $i^{th}$ action of $A^*_0$ invalid (out of goal distribution or in collision).
For an intermediate state $s$ of $A^*_0$, let $p_r^s$ be the probability that after $K$ actions, there is an action moving $A^*_0[s].o$ to the $r-$neighborhood of $A^*_0[s].p$.
We have
$$
    \lim_{K \to \infty} p \geq \lim_{K \to \infty} \prod_{s\in A^*_0}p_r^s \geq \lim_{K \to \infty} (1-(1-\dfrac{\pi r^2}{| W| N})^K)^{|A^*_0|}
$$
where $|W|$ is the size of the workspace. Specifically, the second function indicates the probability of finding a feasible action sequence, where the intermediate state object poses are maintained in the neighborhood of those in $A^*_0$.
In the third function, 
$(\pi r^2)/(|W|N)$ is the lower bound of the probability of choosing an action in $s$, moving $A^*_0[s].o$ to the $r-$neighborhood of $A^*_0[s].p$ and the base of the $|A^*_0|$ exponent is a lower bound of $p_r^s$.
Since $K$ is the only variable in the third function, $\lim_{K \to \infty} p=1$.
\end{proof}

\section{EXPERIMENTS}

\begin{figure*}
  \centering
  \includegraphics[width=0.9\linewidth]{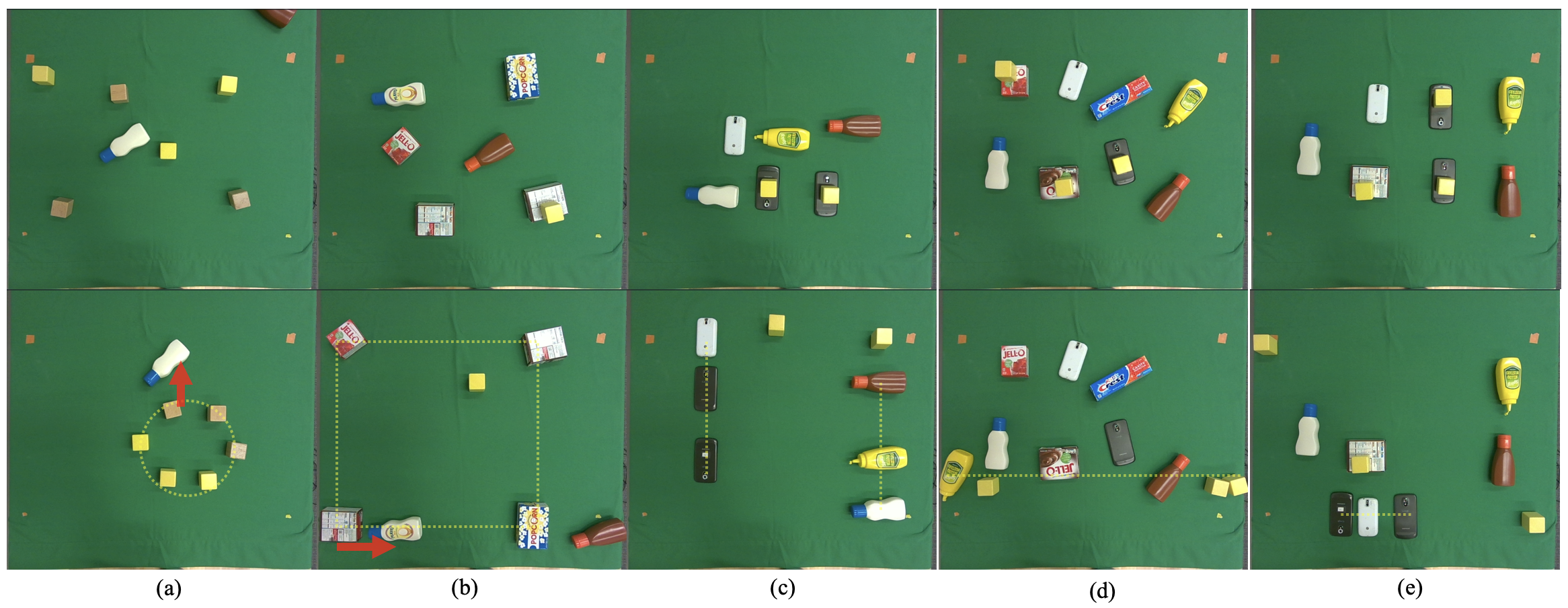}
  \caption{
  \small{Real world demonstration with a UR5e robot. The language instructions for the five scenes are: (a) ``Move all blocks into a circle; while put the white bottle behind one block;"
  (b) ``Put all boxes into a rectangle; and move the white bottle to the right of one box;" (c) ``Move bottles into a line; and formulate all phones into another line;" (d) ``Formulate all yellow objects into a line;" (e) ``Set all phones into a line;". The top row images show the initial scenes and the bottom ones show the results of using LGMCTS on the UR5e.
  Dotted lines imply a shape pattern and red arrows indicate a spatial pattern (left, right, front, back). These real robot experiments show that LGMCTS can parse complex language instructions and also deal with infeasible start configurations as well as pattern composition.}
   }
  \label{fig:robot_evaluation}
  \vspace{-5mm}
\end{figure*}


\subsection{Baselines}

We compare our approach with the following baselines.

\noindent\textbf{Structformer~\cite{liu2022structformer}}. It is a multi-modal transformer specifically designed for language-guided rearrangement tasks.

\noindent\textbf{StructDiffusion~\cite{struct_diffusion}}. It employs a diffusion model combined with a learning-based collision checker for pattern pose generation.

\noindent\textbf{LLMs as Few-Shot Planners~\cite{code-as-policies, singh2023progprompt}}. 
We integrate {\it Code as Policies} and {\it Progprompt} into our evaluation pipeline, where the former generates policy code and the latter Pythonic code. As we cannot directly use the generated code, to streamline the input to our TAMP planner, our setup modifies the output as a sequence of actions (object IDs and their target poses). Initially, LLM processes complete scene details—including object names, IDs, textures, initial poses, and region boundaries for the rearrangement. We then instruct the LLM to take the natural language command and produce the optimal action sequence that contains an ordered list of object IDs and goal poses of the considered objects for rearrangement.
However, for evaluating the Structformer dataset, we consider the structured goal specification pre-available in the dataset as the input to the LLM as it can infer
the action plan from this intermediate representation. This approach avoids redundancy, as generating a natural language command would just restate the same specification for the goal rearrangement. In evaluating both datasets,  we provide a few scenes as examples where the output format is clearly defined to the LLM with ground-truth optimal sequence. This baseline is named LLMs as Few-Shot Planners (LFSP). 

\noindent\textbf{Pose+MCTS}. The Pose+MCTS (PMCTS) approach assumes that a collision-free and semantically aligned goal pose is provided. However, direct execution of this pose might be hindered if the target space is already occupied. To address this, we utilize MCTS to search for a viable plan to place objects in their predetermined goal poses. MCTS is only used as a motion planner. This method follows a two-step approach of using goal poses independently of task planning.

\begin{table}[]
\centering
\scalebox{0.85}{
\begin{tabular}{@{}cccccc@{}}
\toprule
    Method       & Line (4295) & Circle (3416) & Tower (1335)        & Dinner (2440) \\ \midrule
LFSP*~\cite{singh2023progprompt, code-as-policies}             &  41.16\%    &    51.75\%    &   88.80\%   &    27.05\%     \\
Structformer~\cite{liu2022structformer}     &  47.24\%    &    62.64\%    &   99.10\%           &    28.36\%    \\
StructDiffusion~\cite{struct_diffusion}  &  61.49\%    &    81.41\%   &    98.95\%                  &      69.38\%      \\
\textbf{LGMCTS (Ours)}    &    \textbf{95.99\%}         &    \textbf{95.25\%}           &       \textbf{100\%}  &  \textbf{100\%}             \\ \bottomrule
\end{tabular}
}
\caption{\small{Efficacy of LFSP, Structformer, StructDiffusion, and LGMCTS across diverse rearrangement tasks (task counts indicated) from the Structformer dataset. *Due to budget constraints, the LLM baseline LFSP are evaluated on 1150 (10\%) randomly selected scenes of the Structformer dataset.}}
\label{tab:performance}
\end{table}

\subsection{Structformer Dataset}

\begin{figure}
    \centering
    \includegraphics[width=0.75\linewidth]{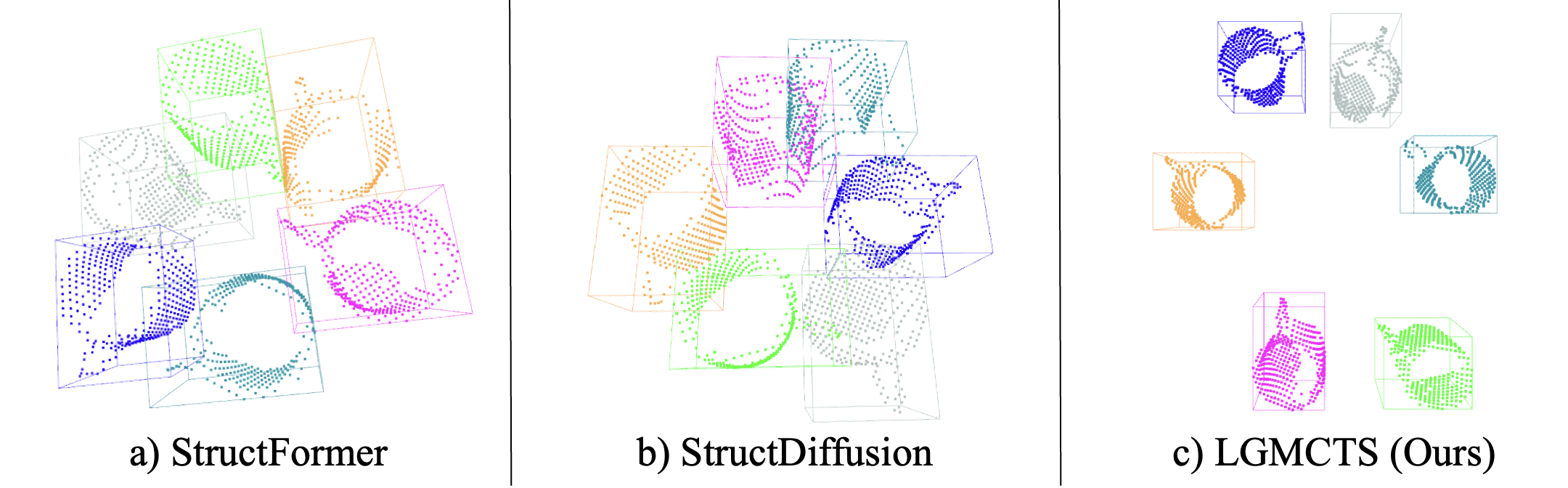}
    \caption{\small{Compared to Structformer and StructDiffusion, LGMCTS ensures a collision-free goal arrangement in all experiments.}}
    \label{fig:struct-result}
\end{figure}
We use the test set from the Structformer dataset to evaluate the goal pose generation ability. This dataset is composed of approximately $11,500$ rearrangement tasks, categorized into four patterns: line, circle, tower, and dinner.  A rearrangement plan is considered successful if it adheres to language constraints and is collision-free, except in the ``tower" task where collisions are inevitable. The ``dinner" task is approached as a composition of patterns, involving the arrangement of items like plates, bowls, and utensils into a ``tower" for plates and bowls, with other items lined up beside it. In both Structformer and StructDiffusion's experimental setup, object selection for rearrangement is based on the object's shape and size. Our evaluation setup does not involve object selection based on shape and size. Hence, to adapt them to our evaluation setup, we provide those two baselines with ground-truth object selection. 
Since the tasks already specify which objects to rearrange for the single pattern rearrangement, based on language instructions, we did not use the LLM parser in LGMCTS for this dataset.

As shown in TABLE~\ref{tab:performance}, LGMCTS demonstrated superior performance in all four rearrangement task categories, achieving remarkable success rates as follows: 95.99\% for ``line", 95.25\% for ``circle", and 100\% for both ``tower" and ``dinner". LSFP performs the least among the baselines due to the inability of LLMs to produce goal patterns with high geometric fidelity.
While StructDiffusion showed improvement over Structformer, it did not match LGMCTS's effectiveness.
For a visual comparison, Fig.~\ref{fig:struct-result} illustrates LGMCTS's success in a ``circle" task scene, highlighting its more actionable goal poses that contribute to higher rearrangement success rates. Conversely, Structformer and StructDiffusion tend to generate goal arrangements with a higher collision risk, leading to lower success rates. 
The evaluation of this dataset strictly assesses the accuracy of collision-free geometric patterns in goal poses, disregarding the executability of plans—a notable limitation of the dataset. 
This limitation is addressed through our proposed ELGR-Benchmark (refer to Section~\ref{sec:elgr}). 
PMCTS method is introduced through our benchmark to solely verify the executability of plans.
As PMCTS deals with motion planning using collision-free ground-truth poses, it was not included in further comparisons on this dataset due to its inherent access to goal poses.
\color{black}

\subsection{ELGR-Benchmark}
\label{sec:elgr}
Existing datasets for semantic object rearrangement, such as Structformer, are limited in that they typically feature only one pattern per scene and do not include crowded scenarios. They also fail to address the challenge of feasibility, particularly when starting configurations are infeasible like one object being placed under another. To bridge these gaps, we introduce ELGR-Bench (\textbf{E}xecutable \textbf{L}anguage-\textbf{G}uided \textbf{R}earrangement \textbf{Bench}mark), which incorporates scenarios with infeasible starting configurations, including tasks that require unstacking and appropriate placement of unstacked objects before the actual rearrangement.
Importantly, this new benchmark presents a novel task termed the ``multi-pattern task", which requires multiple pattern goals to be satisfied during the rearrangement process. In this benchmark, we are considering common shapes such as ``line", ``circle", ``rectangle" and ``spatial" (left/right, front/behind, left/right + front/behind). For each scene, we randomly compose two of the aforementioned patterns and create the multi-pattern task. Success is measured based on the executability of the generated plan and its adherence to semantic requirements. ELGR-Bench builds upon the VIMA-Benchmark~\cite{vima}. 


\begin{table}[]
\centering
\begin{tabular}{@{}cccccc@{}}
\toprule
   Method        & $SR_{p}$ &  $SR_{ep}$  &\\ \midrule
LFSP~\cite{singh2023progprompt, code-as-policies}  &   \textbf{100\%}   &         45.2\%              \\ 
Structformer~\cite{liu2022structformer}  &   n.a.   &         n.a.              \\ 
StructDiffusion~\cite{struct_diffusion}  &   n.a.   &      n.a.              \\

PMCTS &   82.9\%   &        74.1\%              \\
\textbf{LGMCTS (Ours)}   &  90.9\%   &     \textbf{79.2\%}    \\
\bottomrule
\end{tabular}
\caption{\small{$SR_{ep}$, the executable plan success rate, reflects both planning success and the success of executing these plans, indicating if the final positions of objects meet the criteria set by language-based constraints. $SR_{p}$, a part of $SR_{ep}$, only tracks planning success, with PMCTS and LGMCTS capped at 10,000 planning steps. 
If planning with MCTS exceeds the limit, often due to dense object placement in the scene, the motion planning is considered a failure.
}}
\label{tab-ablation-seq}
\vspace{-5mm}
\end{table}

In our benchmark, we compared LGMCTS against two baselines: LFSP and PMCTS, excluding Structformer and StructDiffusion as they cannot handle composite geometric patterns. LFSP, leveraging an LLM, plans goal poses and action sequences simultaneously, while PMCTS, follows a two-step method using a given goal pose and then using MCTS for action planning. LGMCTS uniquely combines goal generation with action planning, aiming for more executable outcomes. As shown in TABLE~\ref{tab-ablation-seq}, LFSP demonstrates a 100\% planning success rate through LLM's capability to generate action plans based on natural language commands and scene context. Nonetheless, over 50\% of these plans are inexecutable, as indicated by the $SR_{ep}$ scores. LGMCTS, however, manages a 90\% success rate in generating action plans, with about 80\% being executable. This performance not only underlines the limitations of LLMs in direct TAMP solving but also showcases LGMCTS's advantage over the two-step PMCTS approach, even when PMCTS is provided with accurate and feasible goal poses.

\subsection{Real Robot Experiments}
We qualitatively evaluated our system using a UR5e robot equipped with a D455 depth camera. The setup of the robot is shown in Fig.~\ref{fig:robot}. We employed the Recognize-Anything-Model (RAM)\cite{zhang2023recognize, huang2023tag2text} and an HSV-based color detector to detect object semantics and colors. Selected queries and their corresponding execution outcomes are presented in Fig.~\ref{fig:robot_evaluation}. We considered five different language instructions involving various objects and initial configurations. For example, Fig.~\ref{fig:robot_evaluation}(b) illustrates the experiment with ``Put all boxes into a rectangle, and move the white bottle to the right of one box." This experiment involves pattern composition, requiring simultaneous consideration of ``line" and ``to the right of" constraint. Additionally, this scene presented an infeasible initial configuration, necessitating the removal of the yellow block before moving the gelatin box. Each experiment presented distinct challenges; for more details, refer to Fig.~\ref{fig:robot_evaluation}. These real-world robot experiments underscore the capabilities of LGMCTS in complex real-world settings.

\section{CONCLUSION}

We introduced LGMCTS, a new framework for tabletop, semantic object rearrangement tasks. LGMCTS stands out by accepting free-form natural language input, accommodating multiple pattern requirements, and jointly solving goal pose generation and action planning. Its main limitation is the extended execution time for complex scenes, highlighting the need for improved tree search efficiency. Future research should focus on adapting LGMCTS to more complex rearrangement scenarios.

\bibliographystyle{IEEEtran}
\bibliography{Star} 
\end{document}